\providecommand{\event}{GandALF 2022} % Name of the event you are submitting to
\newtheorem{definition}{Definition}
\newtheorem{theorem}{Theorem}
\title{Controller Synthesis for Timeline-based Games}
\author{
  Renato Acampora
  \institute{University of Udine, Italy}
  \email{acampora.renato@spes.uniud.it}
  \and
  Luca Geatti \qquad Nicola Gigante
  \institute{Free University of Bozen-Bolzano}
  \email{\{geatti,gigante\}@inf.unibz.it}
  \and
  Angelo Montanari
  \institute{University of Udine, Italy}
  \email{angelo.montanari@uniud.it}
  \and
  Valentino Picotti
  \institute{University of Southern Denmark}
  \email{picotti@imada.sdu.dk}
}
\newif\ifdone
\begin{document}

\maketitle

\begin{abstract}
In the timeline-based approach to planning, originally born in the space sector,
the evolution over time of a set of state variables (the timelines) is governed
by a set of temporal constraints. Traditional timeline-based planning systems
excel at the integration of planning with execution by handling \emph{temporal
uncertainty}. In order to handle general nondeterminism as well, the concept of
\emph{timeline-based games} has been recently introduced. It has been proved
that finding whether a winning strategy exists for such games is
\EXPTIME[2]-complete. However, a concrete approach to synthesize controllers
implementing such strategies is missing. This paper fills this gap, outlining 
an approach to controller synthesis for timeline-based games.
\end{abstract}

%!TeX root = ../gandalf22.tex

\section{Introduction}
\label{sec:introduction}

In the timeline-based approach to planning, the world is viewed as a system made
of a set of independent but interacting components whose behaviour over time
(the timelines) is governed by a set of temporal constraints, called
\emph{synchronization rules}. Timeline-based planning has been originally
introduced in the space industry~\cite{Muscettola94}, with timeline-based
planners developed and used by space agencies on both sides of the
Atlantic~\cite{CestaCFOP06,CestaCDDFOPRS07,FrankJ03,BernardiniS07,ChienRKSEMESFBST00},
both for short- to long-term mission planning~\cite{ChienRTTDNASVGA15} and
on-board autonomy~\cite{FratiniCORD11}.

While successful in practice, only recently timeline-based planning has been
studied from a theoretical perspective. The formalism has been at first compared
with traditional action-based languages \ala STRIPS, proving that they can be
expressed by means of timeline-based languages~\cite{GiganteMMO16}. Then, the
complexity of the timeline-based plan existence problem has been studied: the
problem is \EXPSPACE-complete~\cite{GiganteMMO17} over discrete time in the
general case, and \PSPACE-complete with qualitative
constraints \cite{DellaMonicaGTM20}. On dense time, the problem goes
from being \NP-complete to undecidable, depending on the syntactic restrictions
applied~\cite{BozzelliMMPW20}. The expressiveness of timeline-based languages
has also been studied from a logical perspective~\cite{DellaMonicaGMSS17}, and
an automata-theoretic point of view~\cite{DellaMonicaGMS18}.

Traditional timeline-based planning systems excel at the integration of planning
with \emph{execution} by treating explicitly the concept of \emph{temporal
uncertainty}: the exact timings of the events under control of the environment
need not to be precisely known in advance. However, general nondeterminism,
where the environment can also decide \emph{what} to do (instead of only
\emph{when} to do it) is usually not handled by these systems. To overcome this
limitation,  the concept of \emph{timeline-based games} has been recently
introduced~\cite{GiganteMOCR20}. In these games, the state variables are
partitioned between the controller and the environment, and the latter has the
freedom to play arbitrarily as long as a set of \emph{domain} rules, that define
the game arena, are satisfied. The controller plays to satisfy his set of
\emph{system} rules. A  strategy for controller is winning if it allows him/her 
to win independently from the choices of the environment.

Establishing whether a winning strategy exists for these games has been proved to be
\EXPTIME[2]-complete~\cite{GiganteMOCR20}. However, no concrete way to
synthesize a controller implementing such strategies is known. The proof technique
of the aforementioned complexity result involves the construction of
a huge (doubly exponential) \emph{concurrent game structure}, which is used to
model check some Alternating-time Temporal Logic (ATL) formulas~\cite{AlurHK02}.
While this structure is deterministic and can be in principle used as an arena
to solve a reachability game and synthesize a controller, its construction is
based on theoretical nondeterministic procedures which have no hope to
be ever concretely implemented. On the other hand, the automata-theoretic
approach by Della~Monica~\etal~\cite{DellaMonicaGMS18} provides a concrete and
effective construction of an automaton that accepts a word if and only if the
original planning problem has a solution plan. However, the automaton is
\emph{nondeterministic} and already doubly exponential, and the determinization
needed to use it as an arena would result into a further blow up and a
non-optimal procedure.

In this paper, we provide a %first 
concrete and computationally optimal approach
to controller synthesis for timeline-based games. We overcome the limitations
of both the above-mentioned approaches by devising a direct construction for a
\emph{deterministic} finite-state automaton that recognizes solution plans,
which is doubly exponential in size (thus not requiring the determinization of a
nondeterministic automaton). This automaton is then used as the arena of a
reachability game for which plenty of controller synthesis techniques are known
in the literature.

The paper is structured as follows. In \cref{sec:preliminaries} we introduce the
needed background on timeline-based planning and timeline-based games. Then,
\cref{sec:automaton} provides the core technical contribution of the paper,
namely the construction of the deterministic automaton recognizing solution
plans. \Cref{sec:games} uses this automaton as the game arena to solve the
controller synthesis problem. Last, \cref{sec:conclusions} summarizes the
main contributions of the work and discuss future developments.
%!TeX root = ../gandalf22.tex

\section{Timeline-based games}
\label{sec:preliminaries}

In this section, we introduce timeline-based games, as defined in
\cite{GiganteMOCR20}. 

\subsection{State variables, event sequences, synchronization rules}
The first basic concept is that of \emph{state variable}.

\begin{definition}[State variable]
  \label{def:timelines:state-variable}
  A \emph{state variable} is a tuple $x=(V_x,T_x,D_x,\gamma)$, where:
  \begin{itemize}
  \item $V_x$ is the \emph{finite domain} of $x$;
  \item $T_x:V_x\to2^{V_x}$ is the \emph{value transition function} of $x$,
    which maps each value $v\in V_x$ to the set of values that can immediately
    follow it; %the value $v$;
  \item $D_x:V_x\to\N\times\N$ is the \emph{duration function} of $x$, mapping
    each value $v\in V_x$ to a pair $(\dmin, \dmax)$ specifying respectively the
    minimum and maximum duration of any interval where $x=v$;
  \item $\gamma:V_x\to\set{\mathsf{c},\mathsf{u}}$ is the 
    \emph{controllability tag}, that, for each value $v\in V_x$, specifies whether it
    \emph{controllable} ($\gamma(v)=\mathsf{c}$) or \emph{uncontrollable}
    ($\gamma(v)=\mathsf{u}$).
  \end{itemize}
\end{definition}

Intuitively, a state variable $x$ takes a value from a finite domain and
represents a simple finite-state machine, whose transition function is $T_x$.
The behaviour over time of a set of state variables $\SV$ is defined by a 
set of \emph{timelines}, one for each variable.
%In common definitions of timeline-based planning, given a set $\SV$ of state
%variables, a set of \emph{timelines}, one for each variable, describes the
%behaviour over time of the variables. 
Instead of reasoning about timelines directly, though, in this paper we follow 
the approach outlined in \cite{GiganteMOCR20} and represent the whole execution of a 
system, modeled by means of a set of state variables, with a single word, called \emph{event 
sequence}.

\begin{definition}[Event sequence \cite{GiganteMOCR20}]
  \label{def:event-sequence}
  Let $\SV$ be a set of state variables. Let $\actions_\SV$ be the set of all
  the terms, called \emph{actions}, of the form $\tokstart(x,v)$ or
  $\tokend(x,v)$, where $x\in\SV$ and $v\in V_x$.

  An \emph{event sequence} over $\SV$ is a sequence
  $\evseq=\seq{\event_1,\ldots,\event_n}$ of pairs $\event_i=(A_i,\delta_i)$,
  called \emph{events}, where $A_i\subseteq\actions_\SV$ is a set of actions,
  and $\delta_i\in\N_+$, such that, for any $x\in\SV$:
  \begin{enumerate}
  \item \label{def:event-sequence:start}
        for all $1\le i\le n$, if $\tokstart(x,v)\in A_i$, for some $v\in V_x$,
        then there is no $\tokstart(x,v')$ in any $\event_j$ before the
        closest $\event_k$, with $k > i$, such that $\tokend(x,v)\in A_k$ (if
        any);
  \item \label{def:event-sequence:end}
        for all $1\le i\le n$, if $\tokend(x,v)\in A_i$, for some $v\in V_x$,
        then there is no $\tokend(x,v')$ in any $\event_j$ after the
        closest $\event_k$, with $k < i$, such that $\tokstart(x,v)\in A_k$ (if
        any);
  \item \label{def:event-sequence:gaps-right}
        for all $1\le i < n$, if $\tokend(x,v)\in A_i$, for some $v\in V_x$, then
        $\tokstart(x,v')\in A_i$, for some $v'\in V_x$;
  \item \label{def:event-sequence:gaps-left}
        for all $1< i \le n$, if $\tokstart(x,v)\in A_i$, for some $v\in V_x$,
        then $\tokend(x,v')\in A_i$, for some $v'\in V_x$.
  \end{enumerate}
\end{definition}

Intuitively, an event sequence represents the evolution over time of the state
variables of the system by representing the \emph{start} and the \emph{end} of
\emph{tokens}, \ie a sequence of adjacent
%contiguous 
intervals where a given variable takes a given
value. An event $\event_i=(A_i,\delta_i)$ consists of a set $A_i$ of actions
describing the start or the end of some tokens, happening $\delta_i$ time steps
after the previous one. In an \emph{event sequence}, events are collected to
describe a whole plan.

\Cref{def:event-sequence} intentionally implies that a started token is not
required to end before the end of the sequence, and a token can end without the
corresponding starting action to have ever appeared before. In this case we say
the event sequence is \emph{open} (on the right or on the left, respectively).
Otherwise, it is said to be \emph{closed}. An event sequence closed on the left
and open on the right is also called a \emph{partial plan}. Note that the empty
event sequence is closed on both sides for any variable. Moreover, on closed
event sequences, the first event only contains $\tokstart(x,v)$ actions and the
last event only contains $\tokend(x,v)$ actions, %and 
one for each variable $x$.
Given an event sequence $\evseq=\seq{\event_1,\ldots,\event_n}$ over a set of
state variables $\SV$, with $\event_i=(A_i,\delta_i)$, we define
$\delta(\evseq)=\sum_{1<i\le n}\delta_i$, that is, $\delta(\evseq)$ is the time
elapsed from the start to the end of the sequence (its duration). The amount
of time spanning a subsequence, written as $\delta_{i,j}$ when $\evseq$ is clear
from context, is then $\delta(\slice\evseq_{i,j})=\sum_{i<k\le j}\delta_k$.
Finally, given an event sequence $\evseq=\seq{\event_1,\ldots,\event_n}$, for
each $1<i\le n$, we define $\evseq_{<i}$ as $\seq{\event_1,\ldots,\event_{i-1}}$. 

In timeline-based games, the controller plays to satisfy a set of
\emph{synchronization rules}, which describe the desired behavior of the system.
Synchronization rules relate tokens, possibly belonging to different timelines,
through temporal relations among token endpoints. Let \SV be a set of state variables
and $\toknames = \set{a,b,\ldots}$ be an arbitrary set of \emph{token names}.
Moreover, let an \emph{atomic temporal relation}, or simply \emph{atom}, be an
expression of the form $\production{term} \before_{l,u} \production{term}$, 
where $l\in\N$, $u\in\N\cup\set{\infty}$, and a \emph{term} is either $\tokstart(a)$ 
or $\tokend(a)$, for some $a\in\toknames$.
%The conjuncts appearing in a clause $\clause$ are \emph{atomic temporal
%relations}, or simply \emph{atoms}, of the form $\production{term} \before_{l,u}
%\production{term}$, where $l\in\N$, $u\in\N\cup\set{\infty}$, and a \emph{term}
%is either $\tokstart(a)$ or $\tokend(a)$ for some $a\in\toknames$.
A synchronisation rule \Rule takes the following form:
\begin{equation*}\label{eq:synchronisation-rules}
  \begin{array}{rcl}
    \Rule&\bydef&a_0[x_0=v_0] \implies \E_1 \lor \E_2 \lor \dots \lor \E_k\quad where\\[0.4em]
    \E_i&\bydef&\exists a_1[x_1=v_1] a_2[x_2=v_2]\ldots a_n[x_n=v_n] \suchdot \clause_i
  \end{array}
\end{equation*}
where $a_0, \ldots, a_n \in \toknames$, $x_0,\ldots,x_n \in \SV$, $v_0,\ldots,
v_n$ are such that $v_i \in V_{x_i}$, for all $0 \leq i\leq n$, and
$\clause_i$ is a conjunction of atomic temporal relations (a clause). The elements
$a_i[x_i=v_i]$ are called \emph{quantifiers} and the quantifier $a_0[v_0=x_0]$
is called the \emph{trigger}. The disjuncts in the body are called
\emph{existential statements}.

We say that a token $\tau = (x, v, d)$ \emph{satisfies} a quantifier
$a_i[x_i=v_i]$ if $x = x_i$ and $v = v_i$. The semantics of a synchronisation
rule \Rule states that for every token satisfying the trigger, at least one of
the existential statements is satisfied. Each existential statement $\E_i$
requires the existence of some tokens, satisfying the quantifiers in its prefix,
such that the clause $\clause_i$ is satisfied. When a token satisfies the
trigger of a rule, it is said to \emph{trigger} such a rule.

For space concerns, we do not provide all the details of the semantics of synchronization rules. The reader can find them in \cite{GiganteMOCR20}. Intuitively, each time there is a token that satisfies the trigger of a rule, one of its existential statements must be satisfied as well. The existential statements in turn assert the existence of other tokens that satisfy a conjunction of atoms. %TODO double check correction

If $a$ and $b$ are two token names, then examples of atomic relations are $\tokstart(a)\before_{3,7}\tokend(b)$ and $\tokstart(a)\before_{0,+\infty}\tokstart(b)$. Intuitively, a token name $a$ refers to a specific token, that is, a pair of $\tokstart(x,v)$ and $\tokend(x,v)$ actions in an event sequence, and $\tokstart(a)$ and $\tokend(a)$ to its endpoints. Then, an atom such as $\tokstart(a)\before_{l,u}\tokend(b)$ constrains $a$ to start before the end of $b$, with the distance between the two endpoints to be comprised between the lower and upper bounds $l$ and $u$.

Examples of synchronization rules %, with and without trigger, 
are the following,
where the relations $=$ and $\before$ are respectively syntactic sugar for
$\before_{0,0}$ and $\before_{0,+\infty}$:
\begin{align*}
  a[x_s=\mathsf{Comm}] \implies {}
  & \exists b[x_g=\mathsf{Available}] \suchdot
    \tokstart(b) \before \tokstart(a) \land \tokend(a) \before \tokend(b)\\
  a[x_s=\mathsf{Science}] \implies {}
  & \exists b[x_s=\mathsf{Slewing}] c[x_s=\mathsf{Earth}] d[x_s=\mathsf{Comm}]
    \suchdot {}\\
  & \tokend(a) = \tokstart(b) \land \tokend(b) = \tokstart(c) \land
    \tokend(c) = \tokstart(d)
\end{align*}
where the variables $x_s$ and $x_g$ represent respectively the state of a
spacecraft and the visibility of the communication ground station. The first
rule requires the satellite and the ground station to synchronise their
communications, so that when the satellite is transmitting the ground station
is available for reception. The second rule instructs the system to transmit
data back to Earth after every measurement session, interleaved by the required
slewing operation. A rule whose trigger is empty ($\true$), called 
\emph{triggerless rule}, can be used to state the \emph{goal} of the system.
As an example, they allow one to force the spacecraft to
perform some scientific measurement at all:
\begin{equation*}
  \true \implies\exists a[x_s=\mathsf{Science}]
\end{equation*}

Triggerless rules have a trivial universal quantification, which means they only
demand the existence of some tokens, as specified by the existential statements.
Although triggerless rules are meant to specify the goals of a planning problem,
they can be regarded as syntactic sugar on top of the syntax described above.
Indeed, triggerless rules can be translated into triggered
rules~\cite{GiganteMOCR20}, and thus we do not consider them from here onwards.

Finally, even though our focus is on timeline-based games, we conclude the section 
by formally defining \emph{timeline-based planning problems}.
\begin{definition}[Timeline-based planning problem]
  A \emph{timeline-based planning problem} is a pair $P=(\SV,\S)$, where $\SV$ is
  a set of state variables and $\S$ is a set of synchronization rules over
  $\SV$. An event sequence $\evseq$ over $\SV$ is a solution plan for $P$ if all
  the rules in $\S$ are satisfied by $\evseq$.
\end{definition}

\subsection{The game arena}

We are now ready to introduce \emph{timeline-based games}. Their definition is quite
involved, as their structure has been designed with the goal of being strictly
more general than \emph{timeline-based planning with
uncertainty}~\cite{CialdeaMayerOU16} while being able to capture its semantics
precisely. For space concerns, we keep the exposition quite terse, but the
reader can refer to \cite{GiganteMOCR20} for details.

\begin{definition}[Timeline-based game]
  \label{def:games:game}
  A \emph{timeline-based game} is a tuple $G=(\SV_C,\SV_E,\S,\D)$,
  where $\SV_C$ and $\SV_E$ are the sets of \emph{controlled}
  and \emph{external} variables, respectively, and $\S$ and $\D$ are the sets of
  \emph{system}  and \emph{domain} synchronisation rules, respectively, both 
  involving variables from $\SV_C$ and $\SV_E$.
\end{definition}

A partial plan for $G$ is a partial plan over the state variables
$\SV_C\cup\SV_E$. Let $\partialplans_G$ be the set of all possible partial plans
for $G$, simply $\partialplans$ when there is no ambiguity.

Since $\epsilon$ is a closed event sequence and $\delta(\epsilon)=0$, the
\emph{empty} partial plan $\epsilon$ is a good starting point for the game.
Players incrementally build a richer partial plan, starting from $\epsilon$, by
playing actions that specify which tokens to start and/or to end, adding an
event that extends the event sequence, or complementing the existing
last event of the sequence. We partition all the available actions into those
that are playable by either of the two players.\fitpar

\begin{definition}[Partition of player actions]
  \label{def:games:actions-partition}
  Let $\SV=\SV_C\cup\SV_E$. The set $\actions$ of available actions over $\SV$
   is partitioned into the sets $\actions_C$ of \charlie's actions and  
   $\actions_E$ of \eve's actions, which are defined as follows:\fitpar
  \begin{align}
    \actions_C = {} &
      \underbrace{%
        \set{\tokstart(x,v)\suchthat x\in\SV_C,\; v\in V_x}%
      }_{\text{start tokens on \charlie's timelines}}\;\cup\;
      \underbrace{%
        \set{\tokend(x,v)\suchthat x\in\SV,\; v\in V_x,\; \gamma_x(v)=\ctag}%
      }_{\text{end controllable tokens}}\\
    \actions_E = {} &
      \underbrace{%
        \set{\tokstart(x,v)\suchthat x\in\SV_E,\; v\in V_x}%
      }_{\text{start tokens on \eve's timelines}}\;\cup\;
      \underbrace{%
        \set{\tokend(x,v)\suchthat x\in\SV,\; v\in V_x,\; \gamma_x(v)=\utag}%
      }_{\text{end uncontrollable tokens}}
  \end{align}
\end{definition}

Hence, players can start tokens for the variables that they own, and end the
tokens that hold values that they control. Actions are combined into
\emph{moves} that can start/end multiple tokens at once.

\begin{definition}[Moves]
  \label{def:games:moves}
  A \emph{move} $\move_C$ for \charlie is a term of the form $\wait(\delta_C)$
  or $\play(A_C)$, where $\delta_C\in\N^+$ and $\emptyset\ne
  A_C\subseteq\actions_C$ is  either a set of \emph{starting} actions or 
  a set of \emph{ending} actions.

  A \emph{move} $\move_E$ for \eve is a term of the form $\play(A_E)$ or
  $\play(\delta_E,A_E)$, where $\delta_E\in\N^+$ and $A_E\subseteq\actions_E$ is
  either a set of \emph{starting} actions or a set of \emph{ending} actions.
\end{definition}

We denote by $\moves_C$ and $\moves_E$ the set of moves playable by \charlie
and $\eve$, respectively. Moves such as $\play(A_C)$ and
$\play(\delta_E,A_E)$ can play either $\tokstart(x,v)$ actions only or
$\tokend(x,v)$ actions only. A move of the former kind is called a
\emph{starting} move, while a move of the latter kind is called an \emph{ending}
move. We consider $\wait$ moves as \emph{ending} moves. Moreover, Starting and
ending moves have to be alternated during the game.

\begin{definition}[Round]
  \label{def:games:round}
  A \emph{round} $\round$ is a pair
  $(\move_C,\move_E)\in\moves_C\times\moves_E$ of moves such that:
  \begin{enumerate}
  \item \label{def:games:round:alternation}
        $\move_C$ and $\move_E$ are either both \emph{starting} or both
        \emph{ending} moves;
  \item \label{def:games:round:paring}
        either $\round=(\play(A_C),\play(A_E))$, or
        $\round=(\wait(\delta_C),\play(\delta_E,A_E))$, with
        $\delta_E\le\delta_C$;
  \end{enumerate}
\end{definition}

A \emph{starting} (\emph{ending}) round is one made of starting (ending) moves.
Note that since \charlie cannot play empty moves and $\wait$ moves are
considered ending moves, each round is unambiguously either a starting or an
ending round. Also note that since $\play(\delta_E,A_E)$ moves are played only
in rounds together with $\wait(\delta_C)$, and $\wait(\delta_C)$ is always an
ending move, then any $\play(\delta_E,A_E)$ must be an ending move. We can now
define how a round is applied to the current partial plan to obtain the new one.
The game always starts with a single \emph{starting round}.

\begin{definition}[Outcome of rounds]
  \label{def:games:round-outcome}
  Let $\evseq=\seq{\event_1,\ldots,\event_n}$ be an event sequence, with
  $\event_n=(A_n,\delta_n)$ or $\event_n=(\emptyset,0)$ if $\evseq=\epsilon$.
  Let $\round=(\move_C,\move_E)$ be a round, let $\delta_E$ and $\delta_C$ be
  the time increments of the moves, with $\delta_C=\delta_E=1$ for $\play(A)$
  moves, and let $A_E$ and $A_C$ be the set of actions of the two moves ($A_C$
  is empty if $\move_C$ is a $\wait$ move).

  The \emph{outcome} of $\round$ on $\evseq$ is the event sequence
  $\round(\evseq)$ defined as follows:
  \begin{enumerate}
  \item \label{def:games:round-outcome:starting}
        if $\rho$ is a starting round, then $\rho(\evseq)=\evseq_{< n}\event_n'$,
        where $\event_n'\nobreak=\nobreak(A_n\cup A_C\cup A_E,\delta_n)$;
  \item \label{def:games:round-outcome:ending}
        if $\rho$ is an ending round, then $\rho(\evseq)=\evseq\event'$, where
        $\event'=(A_C\cup A_E,\delta_E)$;
  \end{enumerate}
  We say that $\round$ is \emph{applicable} to $\evseq$ if:
  \begin{enumerate}[label=\alph*)]
  \item \label{def:games:round-outcome:integrity}
        the above construction is well-defined, \ie $\round(\evseq)$ is a
        valid event sequence by \cref{def:event-sequence};
  \item \label{def:games:round-outcome:alternation}
        $\round$ is an ending round if and only if $\evseq$ is open for all
        variables.
  \end{enumerate}
\end{definition}

We say that a single move by either player is applicable to $\evseq$ if there is
a move for the other player such that the resulting round is applicable to
$\evseq$.

The game starts from the empty partial plan $\epsilon$, and players play in
turn, composing a round from the move of each one, which is applied to the
current partial plan to obtain the new one.

It is now time to define the notion of \emph{strategy} for each player, and of
\emph{winning strategy} for \charlie.

\begin{definition}[Strategies]
  \label{def:games:strategies}
  A \emph{strategy for Charlie} is a function
  $\strategy_C:\partialplans\to\moves_C$ that maps any given partial plan
  $\evseq$ to a move $\move_C$ applicable to $\evseq$.
  A \emph{strategy for Eve} is a function
  $\strategy_E:\partialplans\times\moves_C\to\moves_E$ that maps a partial
  plan $\evseq$ and a move $\move_C\in\moves_C$ applicable to $\evseq$, to a
  $\move_E$ such that $\round=(\move_C,\move_E)$ is applicable to
  $\evseq$.
\end{definition}

A sequence $\rounds=\seq{\round_0,\ldots,\round_n}$ of rounds is called a
\emph{play} of the game. A play is said to be \emph{played according to} some
strategy $\strategy_C$ for \charlie, if, starting from the initial partial plan
$\evseq_0=\epsilon$, it holds that $\round_i=(\strategy_C(\Pi_{i-1}),
\move_E^i)$, for some $\move_E^i$, for all $0<i\le n$, and to be played
according to some strategy $\strategy_E$ for \eve if $\round_i=(\move_C^i,
\strategy_E(\Pi_{i-1},\move_C^i))$, for all $0<i\le n$. It can be seen that for
any pair of strategies $(\strategy_C,\strategy_E)$ and any $n\ge0$, there is a
unique run $\rounds_n(\strategy_C,\strategy_E)$  of length $n$ played according
both to $\strategy_C$ and $\strategy_E$.

Then, we say that a partial plan $\evseq$, and the play $\rounds$ such that
$\evseq=\rounds(\epsilon)$, are \emph{admissible}, if the partial plan satisfies
the domain rules, and are \emph{successful} if the partial plan satisfies the
system rules.

\begin{definition}[Admissible strategy for \eve]
  \label{def:games:admissible-strategy}
  A strategy $\strategy_E$ for \eve is \emph{admissible} if for each strategy
  $\strategy_C$ for \charlie, there is $k\ge 0$ such that the play
  $\rounds_k(\strategy_C,\strategy_E)$ is admissible.
\end{definition}

\charlie wins if, \emph{assuming} domain rules are respected, he manages to
satisfy the system rules no matter how \eve plays.

\begin{definition}[Winning strategy for \charlie]
  \label{def:games:winning-strategy}
  Let $\strategy_C$ be a strategy for \charlie. We say that $\strategy_C$ is a
  \emph{winning strategy} for \charlie if for any \emph{admissible} strategy
  $\strategy_E$ for \eve, there exists $n\ge0$ such that the play
  $\rounds_n(\strategy_C,\strategy_E)$ is successful.
\end{definition}

We say that \charlie \emph{wins} the game $G$ if he has a winning strategy,
while \eve \emph{wins} the game if a winning strategy for \charlie does not
exist.
%!TeX root = ../gandalf22.tex

\section{A deterministic automaton for timeline-based planning}
\label{sec:automaton}

In this section we encode a timeline-based planning problem into a
\emph{deterministic} finite state automaton (DFA) that recognises all and only
those event sequences that represent solution plans for such problem. This
automaton will form the basis for the game arena solved in the next section. The
words accepted by the automaton are \emph{event sequences} representing solution
plans.

Let $P=(\SV, S)$ be a timeline-based planning problem. To get a finite alphabet,
we define $d=\max(L,U)+1$, where $L$ and $U$ are in turn the \emph{maximum}
lower and (finite) upper bounds appearing in any rule of $P$, and we account
only for event sequences such that the distance between two consecutive events
is at most $d$. It can be easily seen that this assumption does not loose
generality (for a proof, see Lemma~4.8 in \cite{Gigante19}). Hence, the symbols
of the alphabet $\Sigma$ are \emph{events} of the form $\event = \pair{A,
\delta}$, where $A \subseteq \actions_\SV$ and $1\le\delta\le d$. Formally,
$\Sigma=2^{\actions_\SV}\times\ar{d}$, where $\ar{d}=\set{1,\ldots,d}$. Note
that the size of $\Sigma$ is exponential in the size of the problem. Moreover,
we define the amount $\window(P)$ as the product of all the non-zero
coefficients appearing as upper bounds in rules of $P$. Intuitively, $\window(P)$
is the maximum amount of time a rule of $P$ can \emph{count} far away from the
occurrence of the quantified tokens. For example, consider the following rule:
\begin{align*}
  a_0[x_0=v_0]\to{} &\exists a_1[x_1=v_1] a_2[x_2=v_2] a_3[x_3=v_3] \suchdot \\
  &\tokstart(a_1)\before_{[4,14]}\tokend(a_0)
  \land \tokend(a_0)\before_{[0,+\infty]}\tokend(a_2) \land \tokstart(a_2)\before_{[0,3]}\tokend(a_3) 
\end{align*}
In this case, $\window(P)$ (assuming this is the only rule of the problem),
would be $3\cdot 14=42$. This means the rule can precisely account for what
happens at most $42$ time point from the occurrence of its quantified tokens.
For example, if the token $a_1$ appears at a given distance from $a_0$, it has
to be at less than $42$ time points (less than $14$, in particular), and any
modification of the plan that changes such distance has the potential to break
the satisfaction of the rule. Instead, what happens further away from $a_0$ only
affects the satisfaction of the rule \emph{qualitatively}. Suppose the tokens
$a_2$ and $a_3$ lie at $100$ time points from $a_0$ (at most $3$ time steps from
each other). Changing this distance (while maintaining the qualitative order
between tokens) cannot ever break the satisfaction of the rule. See
\cite{Gigante19} for a precise account of the properties of $\window(P)$.

A key observation underlying our construction is that every atomic temporal
relation $T \before_{l,u} T'$ can be rewritten as the conjunction of two
inequalities $T^\prime - T \leq u$ and $T - T^\prime \leq -l$, and that the
clause \clause of an existential statement \E can be rewritten as a system of
difference constraints $\nu(\clause)$ of the form $T - T' \leq n$, with $n \in
\Z_{+\infty}$. Then, the system $\nu(\clause)$ can be conveniently represented
by a squared matrix $D$ indexed by terms, where the entry associated with $D[T,
T']$ gives the upper bound on $T- T'$. Such matrices, which take the name of
\emph{Difference Bound Matrices} (DBMs)~\cite{dill1989timing,peron2007abstract},
can be conveniently updated as the plan evolves to keep track of the
satisfaction of the atomic temporal relations among terms. In building a DBM for
the system of constraints $\nu(\clause)$, we augment the system with constraints
of kind $\tokstart(a_i)-\tokend(a_i)\leq-\dmin^{x_i=v_i}$ and
$\tokend(a_i)-\tokstart(a_i)\leq\dmax^{x_i=v_i}$, for any quantified token
$a_i[x_i=v_i]$ of $\E$. Moreover, if two different bounds $T - T'\le n$ and
$T-T'\le n'$ with $n'<n$ belong to $\nu(\clause)$, we keep only $T-T'\le n'$. As
an example, the DBM for the existential graph of the rule above is the one in
\cref{fig:dbm}.
\begin{figure}
  \begin{equation*}
    \begin{array}{rcccccccc}\toprule
      &\tokstart(a_0) & \tokend(a_0) & \tokstart(a_1) & \tokend(a_1) &
      \tokstart(a_2) & \tokend(a_2) & \tokstart(a_3) & \tokend(a_3) \\\midrule
      \tokstart(a_0) \\
      \tokend(a_0) & & & -4 & & & \\
      \tokstart(a_1) & & 14 \\
      \tokend(a_1) \\
      \tokstart(a_2)  & & & & & & & & 3 \\
      \tokend(a_2) & & 0 \\
      \tokstart(a_3) \\
      \tokend(a_3)  & & & & & 0 & & & \\\bottomrule
    \end{array}
  \end{equation*}
  \caption{DBM of an example synchronization rule. Missing entries are intended to be $+\infty$.}
  \label{fig:dbm}
\end{figure}
Note that, when the bounds of the temporal relations are translated into a DBM,
there is no longer a distinction between \emph{lower} and \emph{upper} bounds.
However, for some of the entries we can retrieve their original meaning. Indeed,
if $D[T,T'] < 0$, then such entry is the lower bound of a temporal relation
$T\before_{l,u}T'$, whereas, if $D[T, T'] > 0$, it is the upper bound of a
relation $T' \before_{l,u} T$.

On top of DBMs, we define the concept of \emph{matching structure}, a data
structure that allows us to manipulate and reason about partially matched
existential statements, \ie existential statements of which only a part of the
requests has already been satisfied by the part of the word already read, while
the rest can be still potentially matched in the future.

\begin{definition}[Matching Structure]
  \label{def:matching-structure}
  Let $\E\equiv \exists a_1[x_1 = v_1] \dots a_m[x_m = v_m] \,.\, \clause$ be
  the existential statement of a synchronisation rule $\Rule \equiv a_0[x_0 =
  v_0] \rightarrow \E_1 \lor \dots \lor \E_k$ over the set of state variables
  \SV.

  The \emph{matching structure} for $\E$ is a tuple $\M_{\E} = (V, D, M, t)$
  where:
  \begin{itemize}
  \item $V$ is the set of terms $\tokstart(a)$ and $\tokend(a)$ for
    $a\in\set{a_0, \dots, a_m}$;
  \item $D \in \Z_{+\infty}^{|V|^2}$ is a DBM indexed by terms of $V$ where
    $D[T,T']=n$ if $(T-T'\le n) \in \nu(\clause)$, $D[T,T']=0$ if $T=T'$, and $D[T,T']=+\infty$ otherwise;
  \item $M \subseteq V$ and $0\le t \le \window(P)$.
  \end{itemize}
\end{definition}

The set $M$ contains the terms of $V$ that the matching structure has correctly
matched over the event sequence read so far. With $\overline{M} = V \setminus M$
we denote the actions that we have yet to see. Then, we say that a matching
structure $\M$ is \emph{closed} if $M = V$, it is \emph{initial} if $M =
\emptyset$ and it is \emph{active} if it is not closed and $\tokstart(a_0) \in
M$. Intuitively, a matching structure is \emph{active} if its trigger has been
matched over the word the automaton is reading. Then, when all the terms have
been matched over the word, the matching structure becomes \emph{closed}. The
component $t$ is the time elapsed since $\tokstart(a_0)$ has been matched.
% To update matching structures while reading an
% event sequence, two functions are introduced.
When time flows, a matching structure can then be updated as follows.

% A new matching structure is obtained from an existing one according to one of
% the following definitions.

% Updating a matching structure is determined by two
% functions 

% Two operations 

% The following definitions yield new matching
% structures starting from existing ones.

% Two operations can be applied to a matching
% structure to obtain a new one.

% To correctly match an existential while reading an
% event sequence, matching structures are update according to the following
% definitions.

% A matching structure is updated according to
% the following definitions.

% TODO: Usare shift_\delta(M) e match_I(M) 
\begin{definition}[Time shifting]
  \label{def:time-shift}
  Let $\delta > 0$ be a positive amount of time, and $\M = (V, D, M, t)$ be a
  matching structure. The result of shifting $\M$ by $\delta$ time units,
  written $\M + \delta$, is the matching structure $\M^\prime = (V, D^\prime, M,
  t')$ where:
  \begin{itemize}
  \item for all $T, T' \in V$:
    \begin{equation*}
      D^\prime[T,T'] =
      \begin{cases}
        D[T,T'] + \delta &\text{if } T \in M \text{ and } T' \in
        \overline{M}\\% D[Tj] è lower bound
        D[T,T'] - \delta &\text{if } T \in \overline{M} \text{ and } T' \in
        M\\% D[ij] è upper bound
        D[T,T'] &\text{otherwise}
      \end{cases}
    \end{equation*}
  \item and
    \[
      t' =
      \begin{cases}
        t+\delta & \text{if } \M \text{ is \emph{active}}\\
        t & \text{otherwise}
      \end{cases}
    \]
  \end{itemize}
\end{definition}

\begin{definition}[Matching]
  \label{def:matching}
  Let $\M = (V, D, M, t)$ be a matching structure and $I \subseteq \overline{M}$
  a set of matched terms. A matching structure $\M^\prime = (V, D, M^\prime, t)$
  is the result of matching the set $I$, written $\M \cup I$, if $M^\prime = M
  \cup I$.
\end{definition}

Beside updating the reference $t$ to the trigger occurrence of an active
matching structure, \Cref{def:time-shift} dictates how to update the entries of
the DBM. In particular, the distance bounds between any pair of terms $T$ and
$T'$ where one is in $M$ and the other is not are tighten by the elapsing of
time: when $T\in M$ and $T'\in\overline{M}$, $D[T,T']$ is a lower bound loosen
by adding the elapsed time $\delta$, when $T\in\overline{M}$ and $T'\in M$,
$D[T,T']$ is an upper bound tighten by subtracting $\delta$. For example,
consider the DBM in \cref{fig:dbm} and consider the pair of terms
$\tokstart(a_1)$ and $\tokend(a_0)$. $D[\tokstart(a_1),\tokend(a_0)]=-4$,
meaning that $\tokend(a_0)-\tokstart(a_1)\le 4$ must hold. Suppose
$\tokstart(a_1)\in M$ (\ie it has been matched), and $\tokend(a_0)\not\in M$ (it
still has to). Now, if $1$ time point passes, the entry in the DBM is
incremented and updated to $-4+1=-3$, which corresponds to the constraint
$\tokend(a_0)-\tokstart(a_1)\le 3$. This reflects the fact that to be able to
satisfy the constraint, $\tokend(a_0)$ has now only $3$ time steps left before
it is too late.
% Lastly,
\Cref{def:matching} tells us how to update the set $M$ of a matching structure.

To correctly match an existential statement while reading an event sequence, a
matching structure is updated only as long as no violations of temporal
constraints are witnessed. As such, an event is classified from the standpoint
of a matching structure as \emph{admissible} or not.

\begin{definition}[Admissible Event]
  An event $\event = (A, \delta)$ is \emph{admissible} for a matching structure
  $\M_{\E} = (V, D, M, t)$ if and only if, for every $T \in M$
  and $T' \in \overline{M}$, $\delta \leq D[T',T]$, \ie the elapsing of
  $\delta$ time units does not exceed the upper bound of some term $T'$ not yet
  matched by $\M_{\E}$.
\end{definition}

Each admissible event $\event$ read from the word can be matched with a subset
of the terms of the matching structure. There are usually more than one way to
match events and terms. The following definition makes this choice explicit.

\begin{definition}[$I$-match Event]\label{def:match-event}% I for indexes
  Let $\M_{\E} = (V, D, M, t)$ be a matching structure and  $I \subseteq
  \overline{M}$. An $I$\emph{-match event} is an admissible event $\event = (A,
  \delta)$ for $\M_{\E}$ such that:
  \begin{enumerate}
  \item for all token names $a \in \mathsf{N}$ quantified as $a[x = v]$ in $\E$
    we have that:\label{def:match-event:good-match}
    \begin{enumerate}
    \item if $\tokstart(a) \in I$, then $\tokstart(x, v) \in A$;
      \label{def:match-event:good-match:start}
    \item $\tokend(a) \in I$ if and only if $\tokstart(a) \in M$ and $\tokend(x,v) \in
      A$;\label{def:match-event:good-match:end}
    \end{enumerate}
  \item and for all $T \in I$ it holds that:\label{def:match-event:relations}
    \begin{enumerate}
    \item \label{def:match-event:preceding-terms} for every other term $T' \in
      V$, if $D[T',T] \leq 0$, then $T' \in M \cup I$;
    \item \label{def:match-event:lower-bounds} for all $T' \in M$, $\delta \geq
      -D[T',T]$, \ie all the lower bounds on $T$ are satisfied;
    \item \label{def:match-event:zero-no-bounds} for each other term $T' \in I$,
      either $D[T',T] = 0$, $D[T,T'] = 0$, or $D[T',T] = D[T, T'] = +\infty$.
    \end{enumerate}
  \end{enumerate}
\end{definition}

Intuitively, an event is an $I$-match event if the actions in the event
correctly match the terms in $I$. \Cref{def:match-event:good-match} ensures that
each term is correctly matched over an action it represents and, most
importantly, that the endpoints of a quantified token correctly identify the
endpoints of a token in the event sequence. \Cref{def:match-event:relations}
ensures that matching the terms in $I$ does not violate any atomic temporal
relation. In particular, \Cref{def:match-event:preceding-terms} deals with the
qualitative aspect of an ``happens before'' relation, while \Cref{%
  def:match-event:lower-bounds,%
  def:match-event:zero-no-bounds%
} deal with the quantitative aspects of the lower bounds of these relations.
Note that an $\emptyset$-event is admitted.

Let $\matchstructs_P$ be the set of all the matching structures for a planning
problem $P$. By \Cref{def:match-event}, a single event can represent several
$I$-match events for a matching structure, hence a matching structure can evolve
into several matching structures, one for each $I$-match event. Such
evolution is defined as a ternary relation $S \subseteq\matchstructs_P \times
\Sigma \times \matchstructs_P$ such that $(\M, (A, \delta), \M^\prime) \in {S}$
if and only if $(A, \delta)$ is an $I$-match event for $\M$ and $\M^\prime = (\M
+ \delta) \cup I$. To deal with the nondeterministic nature of this relation,
states of the automaton will comprise sets of matching structures collecting all
the possible outcomes of $S$, so that suitable notation for working with sets of
matching structures, denoted by $\Upsilon$ hereafter, is introduced. We define
$\Upsilon^\Rule_t\subseteq\Upsilon$ as the set of all the \emph{active} matching
structures $\M\in\Upsilon$ with timestamp $t$, associated with any existential
statement of $\Rule$. Intuitively, matching structures in $\Upsilon^\Rule_t$
contribute to the fulfilment of the same triggering event for the rule \Rule
(because they have the same timestamp), regardless of the existential statement
they represent. We also define $\Upsilon_\bot\subseteq\Upsilon$ as the set of
\emph{non active} matching structures of $\Upsilon$. A set $\Upsilon$ is
\emph{closed} if there exists $\M \in \Upsilon$ such that $\M$ is \emph{closed}.
Lastly, a function $\step_\event$ extends the relation $S$ to sets of matching
structures: $\step_\event(\Upsilon) = \set{\M' | (\M,\event,\M^\prime)\in S,
\text{ for some } \M \in \Upsilon}$.

% TODO: Mettere la P al pedice di F e D in modo che si capisca essere insiemi
% specifici del problema in considerazione
We are now ready to define the automaton. If $\E$ is an existential statement,
let $\mathbb{E}_\E$ be the set of all the existential statements of the same
rule of $\E$. Let $\mathbb{F}_P$ be the set of functions mapping each existential
statement of $P$ to a set of existential statements, and let $\mathbb{D}_P$ be the
set of functions mapping each existential statement to a set of matching
structures. A simple automaton $\TV_P$ that checks the transition function and
duration functions of the variables is easy to define. Then, given a
timeline-based planning problem $P=(\SV,S)$, the corresponding automaton is
$A_P=\TV_P\cap\S_P$ where $\S_P$, the automaton that checks the satisfaction of
the synchronization rules, is defined as $\S_P = (Q, \Sigma, q_0, F, \tau)$,
where:
\begin{enumerate}
\item $Q = 2^\matchstructs \times \mathbb{D} \times \F \cup \set{\bot}$ is the
  finite set of states, \ie states are tuples of the form $\langle \Upsilon,
  \Delta, \Phi \rangle\in2^\matchstructs \times \mathbb{D} \times \F$, plus a
  sink state $\bot$;
\item $\Sigma$ is the input alphabet defined above;
  % in $\mathcal{A}_P \times [K]$,
  % where $K = \operatorname{window}(P)$;
\item the initial state $q_0 = \langle \Upsilon_0, \Delta_0, \Phi_0 \rangle$ is
  such that $\Upsilon_0$ is the set of initial matching structures of the
  existential statements of $P$ and, for all existential statements $\E$ of $P$,
  we have $\Delta_0(\E) = \emptyset$ and $\Phi_0(\E) = \mathbb{E}_\E$;
  % $\Upsilon = \Set{\M_{\E} | \E \in S \text{ and} \M_{\E}\text{ is
  % \emph{initial}}}$ and, for all
  % $\E \in S$, $\Delta(\E) = \emptyset$ and $\Phi(\E)
  % = \emptyset$;
\item $F \subseteq Q$ is the set of final states defined as:
  \[
    F = \Set{ \langle \Upsilon, \Delta, \Phi \rangle \in Q |
      \begin{gathered}
        \M \text{ is not \emph{active} for all } \M \in
        \Upsilon\\
        \text{and }\Delta(\E)=\emptyset\text{ for all }\E\text{ of } P
      \end{gathered}}
  \]
\item $\tau : Q \times \Sigma \rightarrow Q$ is the transition function that
  given a state $q=\langle \Upsilon, \Delta, \Phi \rangle$ and a symbol $\event
  = (A, \delta)$ computes the new state $\tau(q,\event)$. Let
  $\step^\E_\event(\Upsilon^\Rule_t)=\set{\M_\E \mid
  \M_\E\in\step_\event(\Upsilon^\Rule_t)}$. Moreover, let $\Psi^\Rule_t = \set{ \E |
  \M_{\E} \in \step_\event(\Upsilon^\Rule_t)}$. Then, the updated components of
  the state are based on what follows, where $W = \window(P)$:
  \begin{align*}
    \Upsilon' &= \step_\event(\Upsilon_\bot) \cup \bigcup \Set{
      \step_\event(\Upsilon^\Rule_t) |
      \text{$t<W-\delta$ and 
      $\step_\event(\Upsilon^\Rule_t)$ is not \emph{closed}}} \\
    \Delta'(\E) &=\begin{cases}
        \step^\E_\event(\Upsilon^\Rule_t) & \text{where $t$ is the minimum such that $t\ge W-\delta$ and $\step^\E_\event(\Upsilon^\Rule_t)\ne\emptyset$} \\
        \step_\event(\Delta(\E)) & \text{if such $t$ does not exist}
      \end{cases}\\
    \Phi'(\E) &= \begin{cases}
      \mathbb{E}_\E\quad\text{if $\E\in\Psi(\E')$ for some $\E'$ such that 
      $\Delta'(\E')$ is \emph{closed}}  \\
      \Phi(\E) \setminus 
        \set{
          \E'\mid \exists t\ge W-\delta \suchdot \E'\in\Psi^\Rule_t 
          \land \E\not\in\Psi^\Rule_t
        } \quad \text{otherwise}
    \end{cases}
  \end{align*}

  Let $\Delta''(\E)=\Delta'(\E)$ unless there is an $\E'$ with $\E\in\Phi'(\E')$
  such that $\Delta'(\E')$ is \emph{closed}, in which case
  $\Delta''(\E)=\emptyset$. Then, $\tau(q,\event)=\seq{\Upsilon', \Delta'',
  \Phi'}$ if the following holds:
  \begin{enumerate}
  \item for every $\Upsilon^\Rule_t$, $\step_\event(\Upsilon^\Rule_t) \neq
    \emptyset$, and \label{dfa:delta:no-failed-step}
  \item for every synchronisation rule $\Rule \equiv a_0[x_0=v_0] \rightarrow
    \E_1 \lor \dots \lor \E_n$ in $S$, if $\tokstart(x_0, v_0) \in A$, then
    there exists $\M_{\E_i} = (V,D,M,0) \in \Upsilon'$, 
    with $i \in \{1\dots n\}$, such that $\tokstart(a_0) \in M$;\label{dfa:delta:trigger-capture}
  \end{enumerate}
  Otherwise, $\tau(q,\event)=\bot$.
\end{enumerate}

Let us explain what is going on. The first component $\Upsilon$ of an automaton
state $q=(\Upsilon, \Delta,\Phi)$ is a set of matching structures that keeps
track of what have been tracked so far. Intuitively, the automaton precisely
keeps track of what happened to the last $\window(P)$ time points, and only
summarizes what happened before that window, which is what allows us to keep the
size under control. Any matching structure in $\Upsilon$ has $t<\window(P)$.
Matching structures in $\Upsilon$ evolve following the $\step$ function, until
they are closed or the $t$ component reaches $\window(P)$. Matching structures
that reach $\window(P)$ are promoted to a new role. Their new task is to record
the pieces of existential statements that still have to be matched in order to
satisfy all the trigger events of $\Rule$ that no longer fit into the
\emph{recent history} of the event sequence (\ie the last $\window(P)$ time
points). These matching structures are not stored in $\Upsilon$ though, they are
summarized by the function $\Delta$ that maps each existential statement $\E$ of
a rule $\Rule$ to the set of matching structures for $\E$ with $t=\window(P)$.

When a set $\Upsilon^\Rule_t$ exceeds the bound $\window(P)$, the $\Delta$
function must be updated by merging the information of $\Upsilon^\Rule_t$ to the
information already present in $\Delta$. Now, it has to be noted that, by
closing a set $\Delta(\E)$, we can not conclude that every event that triggered
$\Rule$ actually satisfies $\Rule$. Indeed, there can be sets $\Delta(\E)$ and
$\Delta(\E')$ that are in charge of the satisfaction of the same rule $\Rule$,
but for different trigger events, and closing $\Delta(\E)$ does not imply that
$\Rule$ has been satisfied. The opposite case may also arise, in which
$\Delta(\E)$~and~$\Delta(\E')$ contribute to the fulfilment of the same trigger
events and closing either set suffices to satisfy $\Rule$. To overcome the
information lost when a set of matching structures gets added to the $\Delta$
function, the $\Phi$ function (the third component of the automaton states) maps
each existential statement $\E$ to the set of existential statements $\E'$ such
that $\Delta(\E')$ tracks the fulfilment of the same trigger events of the set
$\Delta(\E)$. We use $\Phi$ as follows: when a set $\Delta(\E)$ gets closed, we
can discard its matching structures and all the matching structures of the sets
$\Delta(\E')$, with $\E' \in \Phi(\E)$.

One can prove the soundness and completeness of our construction.

\begin{theorem}(Soundness and completeness)
  \label{thm:soundness-completeness}
  Let $P=(\SV, S)$ be a timeline-based planning problem and let $\A_P$ be the
  associated automaton. Then, any event sequence $\evseq$ is a solution plan for
  $P$ if and only if $\evseq$ is accepted by $\A_P$.
\end{theorem}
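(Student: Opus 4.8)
The plan is to prove the two directions of the biconditional separately, in each case by relating a run of $\A_P$ on $\evseq$ to the semantics of the synchronization rules. Since $\A_P=\TV_P\cap\S_P$ and $\TV_P$ is a straightforward automaton checking the transition and duration functions of the state variables (which correspond exactly to clauses~\ref{def:event-sequence:start}--\ref{def:event-sequence:gaps-left} of \cref{def:event-sequence} together with $T_x$ and $D_x$), the crux is entirely in $\S_P$: I would reduce the claim to showing that $\S_P$ accepts $\evseq$ iff every rule $\Rule\in S$ is satisfied by $\evseq$. Throughout, I would fix $\evseq=\seq{\event_1,\dots,\event_n}$ with $\event_i=(A_i,\delta_i)$, and let $q_0,q_1,\dots,q_n$ with $q_i=\seq{\Upsilon_i,\Delta_i,\Phi_i}$ be the run of $\S_P$ on $\evseq$ (assuming it does not hit $\bot$; reaching $\bot$ must be shown to coincide with a genuine violation).

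The heart of the argument is a collection of invariants tying the automaton's components to the ``state of affairs'' after reading the prefix $\evseq_{<i}$. First I would make precise the bookkeeping notion of a \emph{matching}: given a trigger occurrence of $\Rule$ at some event $\event_j$ (i.e.\ $\tokstart(x_0,v_0)\in A_j$) and an existential statement $\E$ of $\Rule$, a partial matching is an assignment of the terms of $\E$ to token endpoints in $\evseq$ that respects \cref{def:match-event:good-match} (endpoints of quantified tokens identify genuine tokens) and whose induced difference constraints are exactly those tracked by the DBM after the relevant time shifts. The key lemma states: for every $i$, $\Upsilon_i$ contains exactly the matching structures $\M$ obtainable by some partial matching of some existential statement of some rule, restricted to trigger occurrences within the last $\window(P)$ time points (so $t<\window(P)$), with $M$ equal to the set of terms already matched in $\evseq_{<i}$, and $D$ equal to the DBM after the appropriate shifts; and $\Delta_i(\E)$ plays the same role for trigger occurrences that have fallen outside that window (so $t=\window(P)$), with $\Phi_i(\E)$ recording which existential statements $\Delta_i(\E')$ are in charge of the \emph{same} triggers as $\Delta_i(\E)$. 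The correctness of the $\window(P)$ truncation --- that forgetting quantitative information beyond the window is harmless --- is exactly the property of $\window(P)$ proved in \cite{Gigante19}, which I would cite rather than reprove. The proof of the lemma is by induction on $i$, using \cref{def:time-shift,def:matching,def:match-event} to check that the transition $\tau$ preserves the invariant: the $\Upsilon'$ clause accounts for live in-window matches, the $\Delta'$ clause absorbs structures crossing the window boundary (taking the minimal such $t$ so that no trigger event is skipped), the $\Phi'$ clause maintains the ``same triggers'' equivalence and is reset to $\mathbb{E}_\E$ whenever a sibling set closes, and conditions~\ref{dfa:delta:no-failed-step} and~\ref{dfa:delta:trigger-capture} in the definition of $\tau$ rule out, respectively, an irrecoverable upper-bound violation and a trigger that no existential statement even begins to match.

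Granting the invariant, both directions follow. For completeness ($\evseq$ a solution plan $\Rightarrow$ accepted): since every rule is satisfied, every trigger occurrence of every $\Rule$ is witnessed by some existential statement $\E$ and some choice of quantified tokens; I would argue that at each step the automaton, being deterministic but tracking the \emph{set} of all $I$-match outcomes, retains a matching structure pursuing that witness, so $\tau$ never reaches $\bot$, and that by the time the witnessing tokens are all read the corresponding structure becomes closed --- either inside $\Upsilon$ (if within the window) or as a closed $\Delta(\E)$ set, which via $\Phi$ discharges all sibling trigger obligations --- so that the acceptance condition $F$ (no active $\M\in\Upsilon$, and $\Delta(\E)=\emptyset$ for all $\E$) holds at $q_n$. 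Here I must be careful that a closed $\Delta(\E)$ really certifies \emph{all} triggers it is responsible for, which is precisely what the ``same trigger events'' meaning of $\Phi$ and the $\Delta''$/reset mechanics guarantee. For soundness ($\evseq$ accepted $\Rightarrow$ solution plan): if $q_n\in F$ then, by condition~\ref{dfa:delta:trigger-capture}, every trigger occurrence spawned an active matching structure; by the invariant that structure (or its $\Delta$-summary) must eventually have closed before the end of the word, for otherwise it would still be active in $\Upsilon_n$ or present as a nonempty $\Delta_n(\E)$, contradicting $q_n\in F$; a closed structure yields, by reading off the matched endpoints, an actual set of tokens in $\evseq$ satisfying the clause of its existential statement (the DBM entries having stayed nonnegative exactly encodes that all atoms hold), hence that trigger occurrence satisfies $\Rule$; as the trigger was arbitrary, $\Rule$ is satisfied, and as $\Rule$ was arbitrary, $\evseq$ is a solution plan. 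Finally intersecting with $\TV_P$ adds the structural requirements of \cref{def:event-sequence} and of $T_x,D_x$, giving full solution plans.

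I expect the main obstacle to be the $\Phi$/$\Delta$ bookkeeping around the window boundary: proving that closing a single $\Delta(\E)$ legitimately discharges every trigger event it and its $\Phi$-siblings are jointly responsible for, while never \emph{prematurely} discharging a trigger that genuinely needs a different existential statement. This requires a careful definition of ``$\Delta(\E)$ and $\Delta(\E')$ are in charge of the same trigger events'' and a proof that $\Phi'$ as defined in $\tau$ maintains exactly that relation --- in particular that the set-difference update $\Phi(\E)\setminus\{\E'\mid\exists t\ge W-\delta:\E'\in\Psi^\Rule_t\land\E\notin\Psi^\Rule_t\}$ drops precisely the siblings that diverge at the window boundary. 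A secondary subtlety is the determinization argument: $S$ is nondeterministic, so I must show that carrying the full set of $I$-match successors loses nothing, i.e.\ that a witness existing along \emph{some} nondeterministic branch is equivalent to a witness surviving in the powerset-tracked $\Upsilon$; this is standard subset-construction reasoning but must be phrased so that it interacts correctly with the window truncation and the $F$ condition.
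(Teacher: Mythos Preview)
The paper does not actually prove this theorem: it is stated without proof (the paper only says ``One can prove the soundness and completeness of our construction'' and then moves on to the size bound), so there is no proof in the paper to compare your attempt against. Your proposal is therefore not a reconstruction of an omitted argument but an original one.

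That said, your plan is well aligned with the informal explanation the paper gives of the three state components $\Upsilon$, $\Delta$, $\Phi$, and the invariant you isolate (that $\Upsilon$ tracks all partial matchings for triggers within the last $\window(P)$ time points, $\Delta$ summarises those that have fallen outside, and $\Phi$ records which $\Delta(\E')$ serve the same trigger events) is exactly the reading the paper's prose invites. You have also correctly located the two genuinely delicate points: the $\Phi$/$\Delta$ interaction at the window boundary (why closing one $\Delta(\E)$ discharges precisely the triggers shared with its $\Phi$-siblings and no others), and the subset-construction argument that tracking all $I$-match successors loses nothing. Both are the places where a full proof would need real work, and you flag them appropriately rather than gloss over them. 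The only caution is that the paper's definition of $\Delta'$ picks the \emph{minimum} $t\ge W-\delta$ with $\step^\E_\event(\Upsilon^\Rule_t)\ne\emptyset$, overwriting whatever was in $\Delta(\E)$ before; your invariant should make explicit why this overwriting is safe, i.e., why the newer trigger's obligations subsume (or are representable by the same matching-structure set as) the older ones once both are beyond the window --- this is where the cited property of $\window(P)$ from \cite{Gigante19} must do the heavy lifting.
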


Recall that we assumed the timestamp of each event of event sequences to be
bounded, but since events can have an empty set of actions,
\cref{thm:soundness-completeness} can actually deal with arbitrary event
sequences, after adding suitable empty events. Now, let us look at the size of
the automaton. Let $E$ be the overall number of existential statements in $P$,
which is linear in the size of $P$. It can be seen that $\abs{\mathbb{D}_P} \in
\O({(2^{\abs{\matchstructs_P}})}^E)= \O(2^{E\cdot\abs{\matchstructs_P}})$, \ie
the number of $\Delta$ functions is doubly exponential in the~size~of~$P$. Then,
observe that $\lvert\mathbb{F}_P\rvert \in \mathcal{O}({(2^E)}^E) =
\mathcal{O}(2^{E^2})$. Then, $\abs{\S_P} \in \O(\abs{\Sigma}\cdot
2^{\abs{\matchstructs_P}})$, that is, the size of $\S_P$ is at most exponential
in the number of possible matching structures. To bound this number, let $N$ be
the largest finite constant appearing in $P$ as bound in any atom or value
duration function and let $L$ be the length of the largest existential prefix of
an existential statement occurring inside a rule of $P$. Notice that $N$ is
exponential in the size of $P$, since constants are expressed in binary, while
$L \in \O(\abs{P})$. Then, the entries of a DBM for $P$, of which there is a
number quadratic in $L$, are constrained to take values within the interval
$\ar{-N,N}$ (excluding the infinitary value $+\infty$), whose size is linear in
$N$. By \Cref{def:matching-structure}, it follows that, for the planning problem
$P$, $\abs{\matchstructs_P} \in \O(N^{L^2} \cdot 2^L \cdot \window(P))$, \ie the
number of matching structures is at most exponential in the size of $P$. Hence,
we proved the following:
\begin{theorem}[Size of the automaton]
  Let $P=(\SV, S)$ be a timeline-based planning problem and let $\A_P$ be the
  associated automaton. Then, the size of $A_P$ is at most doubly-exponential in
  the size of $P$.
\end{theorem}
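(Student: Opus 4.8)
The plan is to reduce everything to a state count for $\S_P$, since $\A_P = \TV_P\cap\S_P$ is a product automaton and the auxiliary automaton $\TV_P$ — which merely tracks, for each state variable, its current value and the time elapsed since its last transition — has size at most singly exponential in $\abs P$ and so contributes only a multiplicative singly-exponential factor. For $\S_P = (Q,\Sigma,q_0,F,\tau)$ the relevant quantities are $\abs Q = \abs{2^{\matchstructs_P}}\cdot\abs{\mathbb{D}_P}\cdot\abs{\mathbb{F}_P} + 1$, the alphabet size $\abs{\Sigma}$, and the transition table, of size $\abs Q\cdot\abs{\Sigma}$; so it is enough to show that $\abs{2^{\matchstructs_P}}$ and $\abs{\mathbb{D}_P}$ are at most doubly exponential while $\abs{\mathbb{F}_P}$ and $\abs{\Sigma}$ are at most singly exponential.

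The crux is the count of matching structures. By \Cref{def:matching-structure}, a matching structure $(V,D,M,t)$ for a fixed existential statement is determined by a DBM $D$ over $V\times V$ together with the choices of $M\subseteq V$ and $0\le t\le\window(P)$; with $\abs V\in\O(L)$ for $L$ the length of the longest existential prefix occurring in a rule of $P$, and each DBM entry ranging over $\ar{-N,N}\cup\set{+\infty}$ for $N$ the largest finite constant of $P$, this yields $\abs{\matchstructs_P}\in\O(N^{L^2}\cdot 2^L\cdot\window(P))$. The point I would be careful about is that $N$, the product $\window(P)$, and the alphabet bound $d$ are all \emph{exponentially large as integers} (constants are encoded in binary), while their logarithms are polynomial in $\abs P$: $\log N\le\abs P$, and $\log\window(P)\le\abs P\cdot\log N$ since $\window(P)$ is a product of at most $\abs P$ upper bounds. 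Hence $\log\abs{\matchstructs_P}=\O(L^2\log N + L + \log\window(P))$ is polynomial in $\abs P$, i.e.\ $\abs{\matchstructs_P}$ is at most singly exponential in $\abs P$.

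I would then propagate this bound through the three state components. Since $\abs{2^{\matchstructs_P}}=2^{\abs{\matchstructs_P}}$ with a singly-exponential exponent, this factor is doubly exponential. A $\Delta$-function maps each of the $E\in\O(\abs P)$ existential statements to a subset of $\matchstructs_P$, so $\abs{\mathbb{D}_P}=\bigl(2^{\abs{\matchstructs_P}}\bigr)^{E}=2^{E\cdot\abs{\matchstructs_P}}$, again doubly exponential — this is the component that genuinely forces the outer exponential. A $\Phi$-function maps each existential statement to a set of existential statements, so $\abs{\mathbb{F}_P}=(2^{E})^{E}=2^{E^2}$, merely singly exponential. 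Thus $\abs Q$, a product of two doubly-exponential and one singly-exponential factor, is doubly exponential. Finally $\abs\Sigma=\abs{2^{\actions_\SV}}\cdot d$ with $\abs{\actions_\SV}\in\O(\abs P)$ and $d\le N+1$, so $\abs\Sigma$ is singly exponential; hence $\abs Q\cdot\abs\Sigma$, and with it $\abs{\S_P}$ and $\abs{\A_P}=\abs{\TV_P}\cdot\abs{\S_P}$, is at most doubly exponential in $\abs P$.

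The main obstacle is thus not conceptual but a matter of disciplined bookkeeping on the numeric constants: one must consistently separate ``large as a value'' from ``large as a contribution to the exponent'', checking that every binary-encoded quantity — $N$, $\window(P)$, and the alphabet parameter $d$ — enters $\log\abs{\matchstructs_P}$ only polynomially, so that the nesting stops at two exponentials rather than spuriously producing a third. Once that is in place, the remaining reasoning — the product bound for intersection automata and the arithmetic of composing exponentials — is routine.
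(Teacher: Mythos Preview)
Your argument is correct and follows essentially the same route as the paper: bound $\abs{\matchstructs_P}\in\O(N^{L^2}\cdot 2^L\cdot\window(P))$ as singly exponential (since $N$, $L$, and $\window(P)$ contribute only polynomially to the logarithm), then observe that the state components $2^{\matchstructs_P}$ and $\mathbb{D}_P=2^{E\cdot\abs{\matchstructs_P}}$ are doubly exponential while $\mathbb{F}_P=2^{E^2}$ and $\abs\Sigma$ are only singly exponential. If anything, your bookkeeping is more explicit than the paper's sketch---you spell out the $\TV_P$ factor, the bound $\log\window(P)\le\abs P\cdot\log N$, and the decomposition of $\abs Q$ into its three factors, all of which the paper leaves implicit or abbreviates.
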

Note that this is the same size as the automaton built by
Della~Monica~\etal~\cite{DellaMonicaGMS18}, but their automaton was
\emph{nondeterministic}, while ours is by construction deterministic, essential 
for its use as a game arena.

%%% Local Variables:
%%% TeX-master: "../gandalf22"
%%% End:

%!TeX root = ../gandalf22.tex

\section{Controller synthesis}
\label{sec:games}

In this section we use the deterministic automaton constructed above to
obtain a deterministic arena where we can solve a simple reachability game
for checking the existence of (and, in this case, to synthesize)
a controller for the corresponding timeline-based game.

\subsection{From the automaton to the arena}

Let $G=(\SV_C, \SV_E, \S, \D)$ be a timeline-based game. We use the construction
of the automaton explained in the previous section in order to obtain a game
arena. However, the automaton construction considers a planning problem with a
single set of synchronization rules, while here we have to account for the roles
of both $\S$ and $\D$. 

To do that, let $A_\S$ and $A_\D$ be the deterministic automata built over the
timeline-based planning problem $P_\S=(\SV_C\cup\SV_E, \S)$ and
$P_\D=(\SV_C\cup\SV_E, \D)$, respectively. We define the automaton $A_G$ as
$\overline{A_\D}\cup A_\S$, \ie the union of $A_\S$ with the complement of
$A_\D$. Note that these are all standard automata-theoretic constructions over
DFAs. Any accepting run of $A_G$ represents either a plan that violates the
domain rules or a plan that satisfies both the domain and the system rules, in
conformance with \cref{def:games:winning-strategy}. Note that $A_G$ is
deterministic and can be built from $A_\D$ and $A_\S$ with only a polynomial
increase in size.

Now, the $A_G$ automaton is still not suitable as a game arena, because the
moves of the timeline-based game are not directly visible in the labels of the
transitions. In other words, the $A_G$ automaton reads events, while we need an
automaton that reads game \emph{moves}. In particular, a single transition in
the automaton can correspond to different combinations of rounds, since the
presence of $\wait(\delta)$ moves is not explicit in the transition. For
example, an event $\event=(A, 5)$ can be the result of a $\wait(5)$ move by
\charlie followed by a $\play(5, A)$ move by $\eve$, or by any $\wait(\delta)$
move with $\delta>5$ followed by $\play(5, A)$. Hence, we need to further adapt
$A_G$ to obtain a suitable arena.

Let $A_G=(Q,\Sigma, q_0, F, \tau)$ be the automaton built as described before.
Let $\event=(A,\delta)$ be an event. If $\delta>1$, this transition must have
resulted from \charlie playing a $\wait(\delta')$ move with $\delta'\ge\delta$.
However, if $A$ contains any $\tokend(x,v)$ action with $x\in\SV_C$, this is for
sure the result of more than one pair of starting/closing rounds. In order to
simplify the construction below, we remove this possibility beforehand. More
formally, we define a slightly different automaton $A_G'=(Q,\Sigma,q_0,F,\tau')$
where $\tau'$ is now a \emph{partial} transition function (\ie the automaton
becomes \emph{incomplete}) that agrees with $\tau$ on everything excepting that
transitions $\tau(q,(\actions,\delta))$ is \emph{undefined} if $\delta>1$ and
$\actions$ contains any $\tokend(x,v)$ action with $x\in\SV_C$. You can see an
example of this operation in \cref{fig:constructions}, on the left. Note that
this removal does not change the plans accepted by the automaton because for
each transition $\tau(q,(\actions,\delta))=q'$ with $\delta > 1$ there are two
transitions $\tau(q,(\emptyset,\delta-1))=q''$ and $\tau(q'',(\actions,1))=q'$.

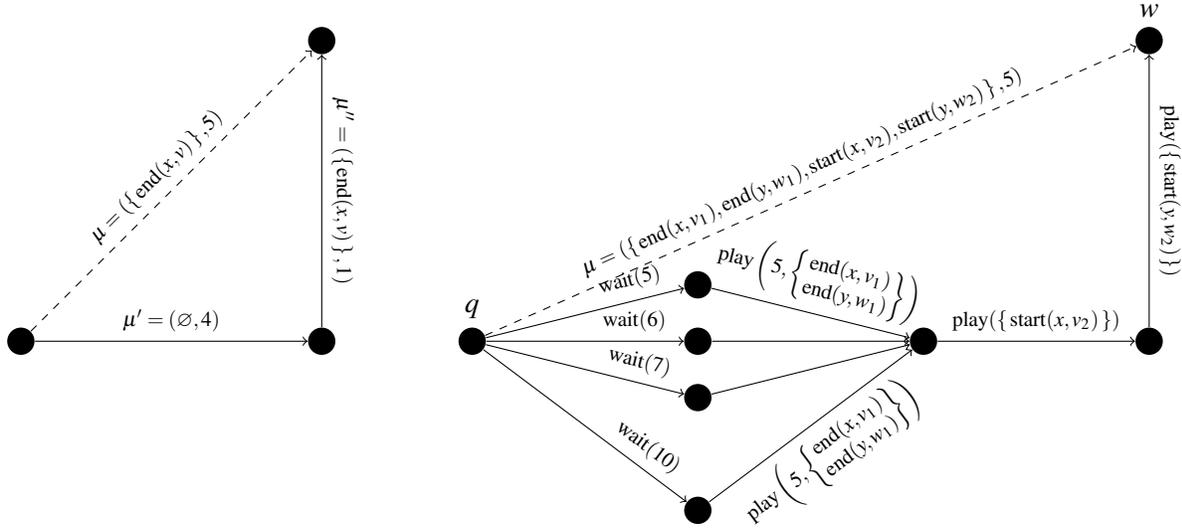
\begin{figure}
  \begin{tikzpicture}[state/.style={fill, circle, minimum width=5pt}]
    \path (0,0) node[state] (n1) { }
          (4,4) node[state] (n2) { }
          (4,0) node[state] (n3) { };

    \path[draw,dashed,->] (n1) -- (n2) node[midway, above, sloped, font=\scriptsize] { 
      $\event=(\set{\tokend(x,v)}, 5)$
    };

    \path[draw, ->] (n1) -- (n3) node[midway, above, font=\scriptsize] { 
      $\event'=(\emptyset, 4)$
    };
    \path[draw, ->] (n3) -- (n2) node[midway, above, sloped, rotate=180, font=\scriptsize] { 
      $\event''=(\set{\tokend(x,v)}, 1)$
    };

    \begin{scope}[xshift=6cm]
      \path (0,0) node[state] (n1) { } node[above, outer sep=5pt] {$q$}
            (9,4) node[state] (n2) { } node[above, outer sep=5pt] {$w$};

      \path[draw,dashed,->] (n1) -- (n2) node[midway, above, sloped, font=\scriptsize] { 
        $\event=(\set{\tokend(x,v_1),\tokend(y,w_1),\tokstart(x,v_2),\tokstart(y,w_2)}, 5)$
      };

      \path (3,0) node[state] (q6) { } node[above right] { }
            (3,0.75) node[state] (q5) { } node[above right] { }
            (3,-0.75) node[state] (q7) { } node[above right] { }
            (3,-2.25) node[state] (q10) { } node[above right] { };

      \path[dashed] (q7) -- (q10);

      \path[draw,->] (n1) -- (q5) node[above, sloped, near end, font=\scriptsize] { $\wait(5)$ };
      \path[draw,->] (n1) -- (q6) node[above, sloped, near end, font=\scriptsize] { $\wait(6)$ };
      \path[draw,->] (n1) -- (q7) node[above, sloped, near end, font=\scriptsize] { $\wait(7)$ };
      \path[draw,->] (n1) -- (q10) node[above, sloped, near end, font=\scriptsize] { $\wait(10)$ };

      \path (6,0) node[state] (q'') { };
      \path[draw,->] (q5) -- (q'') node[above, sloped, midway, font=\scriptsize] { $\play\left(5, 
        \left\{\begin{array}{@{}c@{}}
          \tokend(x,v_1)\\
          \tokend(y,w_1)
        \end{array}\right\}\right)$ };
      \path[draw,->] (q6) -- (q'');
      \path[draw,->] (q7) -- (q'');
      \path[draw,->] (q10) -- (q'') node[below, sloped, midway, font=\scriptsize] { $\play\left(5, 
      \left\{\begin{array}{@{}c@{}}
        \tokend(x,v_1)\\
        \tokend(y,w_1)
      \end{array}\right\}\right)$ };

      \path (9,0) node[state] (q''') { };
      \path[draw,->] (q'') -- (q''') node[midway, above,font=\scriptsize] {
        $\play(\set{\tokstart(x,v_2)})$
      };
      \path[draw,->] (q''') -- (n2) node[midway, sloped,rotate=180, above,font=\scriptsize] {
        $\play(\set{\tokstart(y,w_2)})$
      };
    \end{scope}
  \end{tikzpicture}
  \caption{On the left, the removal of transitions $\event=(A,\delta)$ with
  $\delta>1$ and ending actions of controllable tokens in $A$. On the right, the
  transformation of a transition of the $A_G$ into a sequence of transitions in
  $A^a_G$, with $x\in\SV_C$, $y\in\SV_E$, and
  $\gamma_x(v_1)=\gamma_y(w_1)=\mathsf{u}$.}
  \label{fig:constructions}
\end{figure}

Now we can transform the automaton in order to make the game rounds, and
especially $\wait(\delta)$ moves, explicit. Intuively, each transition of the
automaton is split into four transitions explicitating the four moves of the two
rounds. Given the automaton $A_G'=(Q,\Sigma, q_0, F, \tau')$, we define the
automaton $A_G^a=(Q^a,\Sigma^a, q_0^a, F^a, \tau^a)$, which will be the arena of
our game, as follows:
\begin{enumerate}
  \item $Q^a=Q\cup\set{q_\delta\mid 1\le\delta\le d}\cup\set{q_{\delta,A}\mid 1\le\delta\le d, A\subseteq \mathsf{A}}$ is the set of states;
  \item $\Sigma^a=\moves_C\cup\moves_E$, \ie the alphabet is turned into the set
    of moves of the two players;
  \item $q_0^a=q_0$ and $F^a=F$, \ie initial and final states do not change;
  \item the (partial) transition function $\tau^a$ is defined as follows. Let
    $w=\tau(q,\event)$ with $\event=(\actions,\delta)$. We distinguish the case where $\delta=1$ or $\delta>1$.
    \begin{enumerate}
      \item if $\delta=1$, let $\actions_C\subseteq\actions$ and
      $\actions_E\subseteq\actions$ be the set of actions in $\actions$ playable
      by \charlie and by \eve, respectively. Then:
      \begin{enumerate}
        \item $\tau(q,\play(\actions_C^e))=q_{1,\actions_C^e}$, where 
          $\actions_C^e$ is the set of \emph{ending} actions in $\actions_C$;
        \item $\tau(q_{1,\actions_C^e},\play(\actions_E^e))=q_{1,\actions_C^e\cup\actions_E^e}$, where 
          $\actions_E^e$ is the set of \emph{ending} actions in $\actions_E$;
        \item $\tau(q_{1,\actions_C^e\cup\actions_E^e},\play(\actions_C^s))=q_{1,\actions_C^e\cup\actions_E^e\cup\actions_C^s}$, where 
          $\actions_C^s$ is the set of \emph{starting} actions in $\actions_C$;
        \item $\tau(q_{1,\actions_C^e\cup\actions_E^e\cup\actions_C^s},\play(\actions_E^s))=w$, where 
          $\actions_E^s$ is the set of \emph{starting} actions in $\actions_E$;
      \end{enumerate}
      where the mentioned states are added to $Q^a$ as needed.
      \item if $\delta>1$, let $\actions_C\subseteq\actions$ and
      $\actions_E\subseteq\actions$ be the set of actions in $\actions$ playable
      by \charlie and by \eve, respectively. Note that by construction,
      $\actions_C$ only contains \emph{starting} actions. Then:
      \begin{enumerate}
        \item $\tau(q,\wait(\delta_C))=q_{\delta_C}$ for all 
          $\delta\le\delta_C\le d$;
        \item $\tau(q_{\delta_C},\play(\delta, \actions_E^e))=q_{\delta,\actions_E^e}$
          where $\actions_E^e$ is the set of \emph{ending} actions in
          $\actions_E$;
        \item $\tau(q_{\delta,\actions_E^e},\play(\actions_C))=q_{\delta,\actions_E^e\cup\actions_C}$;
        \item $\tau(q_{\delta,\actions_E^e\cup\actions_C},\play(\actions_E^s))=w$ where
          $\actions_E^s$ is the set of \emph{starting} actions in $\actions_E$;
      \end{enumerate}
      where the mentioned states are added to $Q^a$ as needed.
    \end{enumerate}
    All the transitions not explicitly defined above are undefined.
\end{enumerate}

A graphical example of the above construction can be seen in
\cref{fig:constructions}, on the right. Note that the structure of the original
$A_G$ automaton is preserved by $A^a_G$. In particular, one can see that for
each $q\in Q$ and event $\event=(A,\delta)$, any sequence of moves whose outcome
would append $\event$ to the partial plan (see \cref{def:games:round-outcome})
reach from $q$ the same state $w$ in $A^a_G$ that is reached in $A_G$ by reading
$\event$. Hence, one can consider $A^a_G$ to also being able to \emph{read}
event sequences, even though its alphabet is different. We denote as $[\evseq]$
the state $q\in Q^a$ reached by reading $\evseq$ in $A^a_G$.

Moreover, note that, with a minimal abuse of notation, any play $\bar\rho$ for
the game $G$ can be seen as a word readable by the automaton $A_G^a$. Hence, we
can prove the following.
\begin{theorem}
  \label{thm:arena-soundness}
  If $G$ is a timeline-based game, for any play $\bar\rho$ for $G$, $\bar\rho$
  is successful if and only if it is accepted by $A_G^a$.
\end{theorem}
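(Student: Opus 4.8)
The plan is to reduce \Cref{thm:arena-soundness} to \Cref{thm:soundness-completeness} by exploiting the structural correspondence, already noted in the text, between runs of $A_G^a$ and runs of $A_G$. First I would make precise the claim that $A_G^a$ "reads" event sequences: I would argue by induction on the length of a play $\bar\rho=\seq{\rho_0,\dots,\rho_n}$ that the state $[\evseq]\in Q^a$ reached after $\bar\rho$, where $\evseq=\bar\rho(\epsilon)$, depends only on $\evseq$ and equals the state reached by $A_G$ on reading $\evseq$ directly. The inductive step is a case analysis on the type of the last round $\rho_n$ (starting vs.\ ending, and within ending rounds, $\delta=1$ vs.\ $\delta>1$), checking against \Cref{def:games:round-outcome} that the four-transition "micro-sequence" prescribed by $\tau^a$ composes exactly the effect of appending (or completing) the event $\event=(A,\delta)$ that $\rho_n$ produces on $\evseq$. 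The only subtlety here is that for $\delta>1$ the outcome event has $\actions_C$ consisting only of starting actions: this is exactly why $A_G'$ was defined by deleting the transitions $\tau(q,(\actions,\delta))$ with $\delta>1$ and $\actions$ containing an ending controllable action, so that every event producible by a round of the game still labels a (possibly multi-step) path in $A_G^a$. I would remark that this deletion is harmless because, as observed just before the construction, every such deleted transition was redundant: it is simulated by $(\emptyset,\delta-1)$ followed by $(\actions,1)$, and both of these survive in $A_G'$.

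Second, I would unwind the definitions of "successful" and "accepted". By \Cref{def:games:strategies} and the surrounding text, a play $\bar\rho$ is successful iff the partial plan $\evseq=\bar\rho(\epsilon)$ satisfies the system rules $\S$; and since $\evseq$ is closed on the left, any honest play eventually closes all variables, so $\evseq$ can be completed to a closed event sequence and, conversely, closed event sequences arise exactly as outcomes of plays. On the automaton side, $\bar\rho$ is accepted by $A_G^a$ iff $[\bar\rho]\in F^a=F$, which by the previous paragraph coincides with $\evseq$ being accepted by $A_G$. So it remains to connect "$\evseq$ satisfies $\S$" with "$\evseq$ accepted by $A_G$". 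Here I invoke the construction $A_G=\overline{A_\D}\cup A_\S$ together with \Cref{thm:soundness-completeness} applied to the two planning problems $P_\S=(\SV_C\cup\SV_E,\S)$ and $P_\D=(\SV_C\cup\SV_E,\D)$: $\evseq$ is accepted by $A_G$ iff it is rejected by $A_\D$ (i.e.\ violates a domain rule) or accepted by $A_\S$ (i.e.\ satisfies all system rules). The final link is \Cref{def:games:round-outcome}\ref{def:games:round-outcome:alternation} and the applicability condition \ref{def:games:round-outcome:integrity}, which guarantee that every play in the game already respects the domain arena constraints needed so that "play $\bar\rho$" corresponds to an \emph{admissible} partial plan, making the $\overline{A_\D}$ disjunct vacuous along plays and leaving exactly "$\evseq$ satisfies $\S$" $\iff$ "$\evseq$ accepted by $A_S$" $\iff$ "$\bar\rho$ accepted by $A_G^a$".

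The main obstacle I anticipate is the first step — establishing the exact operational correspondence between the four-transition gadget of $\tau^a$ and the single-event outcome of a round — and in particular handling the interaction between $\wait(\delta_C)$ moves with $\delta_C>\delta$ and the time increment $\delta$ actually recorded in the event. The construction sends $q$ to $q_{\delta_C}$ for \emph{every} $\delta\le\delta_C\le d$, but then all these branches reconverge via $\play(\delta,\actions_E^e)$ to the same $q_{\delta,\actions_E^e}$, so that the state ultimately reached is a function of $\event=(\actions,\delta)$ alone and not of $\delta_C$; I would need to spell this out carefully, together with the fact that $\delta_E=\delta$ is forced by \Cref{def:games:round}\ref{def:games:round:paring} and \Cref{def:games:round-outcome}\ref{def:games:round-outcome:ending}. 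A secondary bookkeeping point is that a starting round \emph{complements} the current last event rather than appending a fresh one (\Cref{def:games:round-outcome}\ref{def:games:round-outcome:starting}), so the very first round of the game and every starting round after an ending round must be matched against the $\delta=1$ branch of $\tau^a$ with the understanding that it augments $A_n$; once this is phrased correctly the induction goes through routinely. I would also note in passing that $A_G^a$ is deterministic on the alphabet $\moves_C\cup\moves_E$ and that the partiality of $\tau^a$ exactly blocks ill-formed move sequences, so "accepted" is well-defined for plays.
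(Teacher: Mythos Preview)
The paper does not give a formal proof of this theorem; the argument is the informal paragraph immediately preceding the statement, which asserts that any sequence of moves whose outcome appends an event $\event$ to the partial plan reaches, from $q$, the same state in $A_G^a$ that $A_G$ reaches by reading $\event$. Your first step, the inductive case analysis establishing this operational correspondence, is exactly that claim made precise, and it is fine.

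Your second step, however, contains a genuine error. You claim that the applicability conditions of \cref{def:games:round-outcome} guarantee that every play yields an \emph{admissible} partial plan, so that the $\overline{A_\D}$ disjunct in $A_G=\overline{A_\D}\cup A_\S$ is vacuous along plays. This is false. Condition~(a) there only requires that $\rho(\evseq)$ be a well-formed event sequence in the sense of \cref{def:event-sequence}; condition~(b) only governs the alternation of starting and ending rounds. Neither says anything about the \emph{domain rules} $\D$, which are arbitrary synchronisation rules. ``Admissible'' as defined just before \cref{def:games:admissible-strategy} means ``satisfies $\D$'', and nothing in the mechanics of rounds forces that: \eve may perfectly well violate $\D$, which is precisely why \cref{def:games:winning-strategy} quantifies only over \emph{admissible} strategies for \eve.

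Consequently, acceptance by $A_G^a$ is not equivalent to ``$\evseq$ satisfies $\S$'' alone; by construction it is equivalent to ``$\evseq$ violates $\D$ \emph{or} $\evseq$ satisfies $\S$'', as the paper itself notes when introducing $A_G$. If ``successful'' is read literally as ``satisfies $\S$'', the biconditional is too strong in one direction (a play violating both $\D$ and $\S$ is accepted by $A_G^a$ but is not successful). The theorem is really being used downstream with the winning-condition semantics of \cref{def:games:winning-strategy}: reaching $F^a$ means \charlie has met his objective. Your proof should drop the vacuity claim and carry the full disjunction through rather than trying to eliminate it.
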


\subsection{Computing the Winning Strategy}

Once built the arena, we can focus on computing the winning region $W_C$
for \charlie, that is, the set of states of the arena from which \charlie
can force the play to reach a final state of $A_G^a$, no matter of the
strategy of \eve. These games are called \emph{reachability
games}~\cite{Thomas2008}.  If the winning region $W_C$ is not empty,
a winning strategy of \charlie can be simply derived from $W_C$.  As
a consequence of \cref{thm:soundness-completeness,thm:arena-soundness}, the
computed winning strategy $\sigma_C$ for $A_G^a$ respects
\cref{def:games:winning-strategy}.

As stated in \cite[Theorem 4.1]{Thomas2008}, rechability games are
determined, and the winning region $W_C$ along with the corresponding
positional winning strategy $s$ are computable. Let $A_G^a=(Q^a,\Sigma^a,
q_0^a, F^a, \tau^a)$ be the automaton built from $G$ as described in the
previous section. Note that, by construction, in any state $q\in Q^a$ only
one of the players has available moves. Let $Q^a_C\subseteq Q^a$ be the set
of states \emph{belonging} to \charlie, \ie states from which \charlie can
move, and let $Q^a_E=Q^a\setminus Q^a_C$. Moreover, let $E=\set{ (q, q')
\in Q^a\times Q^a \mid \exists \event \mathrel{.} \tau^a(q, \event) = q'}$,
\ie the set of all the edges of $A_G^a$. 

Now, for each $i\ge0$, we can compute the $i$-th attractor of $F^a$, written $Attr^i_C(F^a)$, that is, the set of states from which \charlie can win in at most $i$ steps. $Attr^i_C(F^a)$ is defined as follows:
\begin{align*} Attr^0_C (F^a) &= F^a \\
Attr^{i+1}_C (F^a) &= Attr^i_C (F^a) \\
&\cup \set{ q^a \in Q^a_C \, | \, \exists r \big((q^a, r) \in E \land r \in Attr^i_C (F^a)\big) } \\
&\cup \set{ q^a \in Q^a_E \, | \, \forall r \big((q^a, r) \in E \implies r \in Attr^i_C (F^a) \big) }
\end{align*}
As remarked in \cite{Thomas2008}, the sequence $Attr^0_C (F^a) \subseteq
Attr^1_C (F^a) \subseteq Attr^2_C (F^a) \subseteq \ldots$ becomes
stationary for some index $k \leq \lvert Q^a \rvert$. Thus, we define
$Attr_C (F^a) = \bigcup^{\lvert Q^a \rvert}_{i=0} Attr^i_C(F^a)$. 
In order to prove that $W_C = Attr_C(F^a)$, it suffices to use the proof of
\cite[Theorem 4.1]{Thomas2008} for showing that $Attr_C(F^a) \subseteq W_C$
and $W_C \subseteq Attr_C(F^a)$.

To compute a winning strategy for \charlie in the case that $q_0^a\in W_C$,
it is sufficient to define $s (q) = \mu$ for any $\mu$ such that
$\tau^a(q,\mu)=q'$ with $q,q'\in W_C$ (which is guaranteed to exist by
construction of the attractor). Then, the strategy $\sigma_C$ for \charlie in $G$ (see \cref{def:games:winning-strategy}) is defined as $\sigma_C(\evseq)
= s([\evseq])$. 

\begin{theorem}
    \label{thm:winning-region-soundness-completeness}
  Given $A_G^a=(Q^a,\Sigma^a, q_0^a, F^a, \tau^a)$, $q_0^a\in W_C$ if and only
  if \charlie has a winning strategy $\sigma_C$ for $G$.
\end{theorem}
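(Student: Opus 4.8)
The plan is to reduce the theorem to the correctness results already established, namely \cref{thm:soundness-completeness} and \cref{thm:arena-soundness}, together with the standard theory of reachability games \cite{Thomas2008}. The overall strategy is a two-way implication where each direction builds a strategy from the other side: from a winning region membership in the arena we extract a winning strategy for \charlie in $G$, and from a winning strategy in $G$ we argue that $q_0^a$ must lie in $W_C$.

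First I would establish the identity $W_C = Attr_C(F^a)$. This is not really new work: it follows from the proof of \cite[Theorem 4.1]{Thomas2008} applied to our arena $A_G^a$, using the partition $Q^a = Q^a_C \cup Q^a_E$ and the observation (noted in the construction) that in every state of $A_G^a$ exactly one player has available moves, so the arena is a genuine turn-based game graph. The inclusion $Attr_C(F^a) \subseteq W_C$ follows by induction on the attractor index, exhibiting for \charlie-states the witnessing move into a lower attractor level and for \eve-states the fact that all successors stay in the region; the reverse inclusion $W_C \subseteq Attr_C(F^a)$ is the usual argument that a state outside the attractor admits an \eve-strategy trapping the play forever in the complement, hence never reaching $F^a$.

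Next, for the ``if'' direction of the theorem: suppose $q_0^a \in W_C$. I would take the positional strategy $s$ defined in the text ($s(q)=\mu$ with $\tau^a(q,\mu)=q'$, $q,q'\in W_C$, which exists by the attractor construction) and lift it to $\sigma_C(\evseq) = s([\evseq])$, using the fact, recorded just before \cref{thm:arena-soundness}, that $A_G^a$ can be regarded as reading event sequences via $[\cdot]$ and that any round-decomposition of an event reaches the same arena state. One must check $\sigma_C$ is well-defined as a strategy for \charlie per \cref{def:games:strategies} (the moves it prescribes are applicable), and then argue that any play consistent with $\sigma_C$ and any admissible \eve-strategy eventually reaches $F^a$: membership in the attractor gives a bounded-rank decrease along \charlie's moves, and \eve's admissibility (\cref{def:games:admissible-strategy}) guarantees the play does not get stuck violating the domain rules, so the corresponding word is accepted by $A_G^a$; by \cref{thm:arena-soundness} the play is successful, which is exactly \cref{def:games:winning-strategy}. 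For the ``only if'' direction, assume $q_0^a \notin W_C$; then $q_0^a$ is in \eve's winning region, yielding a positional \eve-strategy on the arena that avoids $F^a$ forever against every \charlie-strategy; I would translate this into an admissible strategy $\strategy_E$ for \eve in $G$ — admissibility needs a small extra argument, e.g. \eve can always extend to eventually satisfy the domain rules or otherwise the relevant plays are vacuously handled — such that no play consistent with $\strategy_E$ is ever successful, contradicting the existence of a winning $\sigma_C$.

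The main obstacle I expect is the careful bookkeeping in translating between the arena level and the game level, in particular handling \eve's \emph{admissibility} condition coherently. The automaton $A_G^a$ accepts a play iff it violates $\D$ or satisfies both $\D$ and $\S$ (via $A_G = \overline{A_\D}\cup A_\S$), whereas \cref{def:games:winning-strategy} quantifies only over \emph{admissible} \eve-strategies — those that eventually respect the domain rules. Making sure the reachability-game notion of winning on $A_G^a$ lines up exactly with this restricted quantification (so that \charlie winning the reachability game is neither weaker nor stronger than \charlie winning $G$) is the delicate point, and it is where \cref{thm:arena-soundness} together with the semantics of the union automaton must be invoked with some care. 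The rest — positional determinacy, attractor computation, lifting $s$ through $[\cdot]$ — is routine given the machinery already in place.
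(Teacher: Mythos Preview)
Your soundness direction matches the paper's proof exactly: take the positional strategy $s$ from the attractor, lift it via $\sigma_C(\evseq)=s([\evseq])$, and conclude by \cref{thm:arena-soundness}. For completeness, you argue by contrapositive---from $q_0^a\notin W_C$ you extract a spoiling \eve-strategy on the arena and translate it back to $G$---whereas the paper proceeds directly: it assumes \charlie has a winning $\sigma_C$, observes that every resulting play against an admissible \eve is successful, invokes \cref{thm:arena-soundness} to land the play in $F^a$, and then concludes $q_0^a\in W_C$ ``by definition of reachability game.'' The two routes are dual and both short.

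Where your proposal is actually more careful than the paper is precisely the point you flag as the main obstacle: the mismatch between the reachability-game quantification (over \emph{all} \eve-strategies on the arena) and \cref{def:games:winning-strategy} (over \emph{admissible} \eve-strategies only). The paper's proof glosses over this; your contrapositive route forces you to confront it (you must argue the arena-level spoiling strategy corresponds to an \emph{admissible} $\strategy_E$ in $G$, or else dispose of the inadmissible case). Your instinct that this is handled by the construction $A_G=\overline{A_\D}\cup A_\S$ is the right one: an inadmissible \eve on the arena yields plays that never satisfy $\D$, but such plays are accepted by $\overline{A_\D}$ and hence reach $F^a$, so \eve cannot avoid $F^a$ that way. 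Spelling this out would in fact strengthen the argument relative to the paper's version.
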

\begin{proof}
  We first prove \emph{soundness}, that is, $q_0^a\in W_C$ implies that
  \charlie has a winning strategy $\sigma_C$ for $G$. If $q_0^a\in W_C$,
  then it means that there exists a positional winning strategy $s$ for
  \charlie for the reachability game over the arena $A_G^a$. By
  \cref{thm:arena-soundness} and by the definition of reachability game, we
  know that each play generated by $s$ corresponds to a successful play for
  game $G$. 
  Let $\sigma_C(\evseq)=s([\evseq])$ be the winning strategy for \charlie
  in game $G$ as defined above. By construction of $\sigma_C$ and by
  \cref{def:games:winning-strategy}, this means that $\sigma_C$ is
  a winning strategy of \charlie for $G$.

  To prove \emph{completeness} (\ie if \charlie has a winning strategy
  $\sigma_C$ for $G$ then $q_0^a\in W_C$), we proceed as follows. From
  \cref{def:games:winning-strategy} we know that a winning strategy
  $\sigma_C$ for \charlie is a strategy such that for every admissible
  strategy $\sigma_E$ for \eve, there exists $n \ge 0$ such that the play
  $\rounds_n(\strategy_C,\strategy_E)$ is successful.  From
  \cref{thm:arena-soundness}, we know that
  $\rounds_n(\strategy_C,\strategy_E)$ is accepted by $A^a_G$.  Therefore,
  $\rounds_n(\strategy_C,\strategy_E)$ reaches a state in the set
  $F^a$ starting from $q^a_0$. By definition of reachability game, this
  means that $q^a_0 \in W_C$.
\end{proof}

%!TeX root = ../gandalf22.tex

\section{Conclusions}
\label{sec:conclusions}

In this paper, we completed the picture about timeline-based games by providing
an effective procedure for controller synthesis, whereas before only a proof of
the complexity of the strategy existence problem was known. Previous approaches
either provided a deterministic concurrent game structure which was however not
built effectively, or an effectively built automata which was, however,
nondeterministic and thus unsuitable for use as a game arena without a costly
determinization. Our approach surpasses the limits of both previous ones by
providing a deterministic construction, of optimal asymptotic size, suitable to
be used as a game arena. Then, we solve the reachability game on the arena with
standard methods to effectively compute the winning strategy for the game, if it
exists.

This work paves the way to interesting future developments. On the one hand, the
effective procedure shown here can be finally implemented, bringing
timeline-based games from theory to practice. On the other hand, developing an
effective system based on such games requires to answer many interesting
questions, from which concrete modeling language to adopt, to which algorithmic
improvements are needed to make the approach feasible. For example, it can be
foreseen that, to solve the fixpoint computation that leads to the strategy with
reasonable performance, the application of \emph{symbolic techniques} would be
needed.

\section*{Acknowledgements}
Nicola Gigante and Luca Geatti acknowledge the support of the Free University of
Bozen-Bolzano, Faculty of Computer Science, by means of the projects TOTA
(\emph{Temporal Ontologies and Tableaux Algorithms}) and STAGE (\emph{Synthesis
of Timeline-based Planning Games}).

\bibliographystyle{eptcs}
\bibliography{biblio}

\end{document}